\documentclass{article}
\usepackage[preprint]{neurips_2026}
\usepackage[utf8]{inputenc}
\usepackage[T1]{fontenc}
\usepackage[colorlinks=true, linkcolor=red, citecolor=blue, urlcolor=blue]{hyperref}
\usepackage{url}
\usepackage{booktabs}
\usepackage{amsfonts}
\usepackage{nicefrac} 
\usepackage{mathtools}
\usepackage{graphicx}
\usepackage{amsmath,amsthm, amssymb}
\usepackage{tikz}
\usetikzlibrary{patterns}
\usepackage[dvipsnames]{xcolor}
\usepackage{array}
\usepackage{caption}
\usepackage{xspace}
\newtheorem{definition}{Definition}
\newtheorem{theorem}{Theorem}
\newtheorem{lemma}{Lemma}

\usepackage{mathrsfs}
\usepackage[colorinlistoftodos]{todonotes}
\usepackage{microtype}
\usepackage{bm}
\usepackage{color}
\usepackage{multirow}
\usepackage{subcaption}
\usepackage{cleveref}

\newcommand{\vx}{\bm{x}}
\newcommand{\vth}{\bm{\theta}}
\newcommand{\vt}{\bm{\tau}}
\newcommand{\name}{\textit{selective composition}\xspace}

\title{Compositional Generalization in Autoregressive Models via Logit Composition}

\author{%
  \textbf{Aakash Kumar} \\
  \textbf{Maria Sofia Bucarelli} \\
  \textbf{Emanuele Natale} \\
  COATI, CNRS, Inria, I3S,\\
  Université Coté d’Azur, France\\
  \texttt{\{aakash.kumar, maria-sofia.bucarelli, emanuele.natale\}@inria.fr}
}

\begin{document}

\maketitle

\begin{abstract}
Composing autoregressive models remains a core challenge in understanding how large language models can combine behaviors or skills learned across tasks. We introduce a new and principled composition strategy for autoregressive systems, inspired by composition methods developed for diffusion models. Under a factorized-conditionals assumption, we show that the resulting composition is projective: each component model preserves control over its own designated subspace of the output distribution avoiding interference between models. This property is further preserved under smooth reparameterizations of the output space, yielding a feature-space theorem. Finally, we show that composition preserves length-generalizing behavior when the factorization assumptions and component guarantees hold uniformly at the target length. These results provide a principled understanding of when model composition and merging succeed in autoregressive systems and identify conditions under which their interactions remain stable.

\end{abstract}

\section{Introduction}
    \label{sec:intro}
    Composing specialized generative models is an appealing path toward building systems that are more modular, reusable, and controllable than a single monolithic model. If one model captures a particular concept, another captures a different one, and a third provides a shared background, a natural question is whether these components can be combined \emph{at inference time} to produce a model that behaves as though it had learned all concepts jointly.

For Transformer-based autoregressive models, existing approaches to this problem typically operate in either \emph{parameter space} or \emph{architecture space}. Parameter-space methods (such as weight interpolation~\cite{utans1996weight, wortsman2022model}, Task Arithmetic~\cite{ilharco2023editing}, and sparsity-based merging~\cite{yadav2023ties, yu2024language}) combine fine-tuned checkpoints by averaging or adding task vectors, but provide limited guarantees on the resulting output distribution. Architecture-space approaches, such as Mixture-of-Experts (MoE)~\cite{jacobs1991adaptive, shazeer2017outrageously, fedus2022switch}, combine models at the level of routing and activation, typically without a distinguished background model and with input-dependent selection. Together, these approaches do not provide a static, explicit composition rule at the level of output distributions with guarantees on the resulting autoregressive model.

A related line of work in diffusion models has demonstrated that models can be composed by \emph{linearly combining their score functions}~\cite{du2023reduce, liu2022compositional}. This approach enables compositional generation and can even extrapolate beyond the support of individual models. \cite{bradley2025mechanisms} provided a principled explanation of when such composition is valid, introducing the notion of \emph{projective composition} and identifying a factorization property as a key sufficient condition. However, this perspective has not been extended to autoregressive models, where generation is inherently sequential and conditioned on an evolving prefix.


Inspired by the projective composition framework of \cite{bradley2025mechanisms}, we extend their formalism to autoregressive models such as Transformers. Given a collection of diffusion models $(p_1, \dots, p_k, p_b)$, where $p_b$ denotes a shared background distribution, \cite{bradley2025mechanisms} studied the composition operator
\begin{equation}
\label{eq:com_op}
    \mathcal{C}[(p_1, p_2, \dots, p_k, p_b)] \equiv p_b \prod_{i=1}^{k} \frac{p_i}{p_b}.
\end{equation}
They formalized the notion of correct composition and showed that the operator in
Equation~\ref{eq:com_op} yields a correct composition under certain conditions. Our \name extends this probability-space composition principle to autoregressive models such as Transformers, which form the basis of large language models (LLMs), by applying an explicit arithmetic rule to conditional next-token distributions.

Beyond correctness, we establish a conditional length-generalization guarantee. In particular, when the factorization assumptions and component-model guarantees hold uniformly at a target length, the composed model inherits the corresponding behavior on the combined task. This highlights a key advantage of distribution-level composition: it preserves structural properties of the underlying models under explicit assumptions. To build intuition, we begin with a simple toy example illustrating how composition works in practice.

\subsection*{Example: compositional reasoning in trigonometry}
\label{sec:lettersubs}

Consider a collection of trigonometric problems of the form \textcolor{blue}{Prove $X = Y$}, where \textcolor{blue}{$X$} and \textcolor{blue}{$Y$} are trigonometric expressions. Suppose these problems can be decomposed into distinct skills (see Figure \ref{fig:trig}):
\begin{itemize}
    \item algebraic manipulation,
    \item application of the identity $\sin^2(x) + \cos^2(x) = 1$,
    \item application of the identity $\sec^2(x) - \tan^2(x) = 1$.
\end{itemize}

Let $p_b$ be a background Transformer that performs algebraic manipulations, while $p_1$ and $p_2$ are specialized models that apply the first and second identities, respectively. Each model is trained independently and excels only at its designated skill.

The key question is whether we can compose these models to get a model capable of solving problems that require \emph{multiple identities in sequence}. For example, can the composed model solve a problem that requires first rewriting an expression using $\sin^2(x) + \cos^2(x) = 1$ and later applying $\sec^2(x) - \tan^2(x) = 1$, interleaved with algebraic steps?

This setting highlights the central challenge of autoregressive composition: the applicability of each skill depends on the \emph{current prefix} of the generated solution. As we show in this work, under suitable conditions ensuring that these skills act on disjoint segments of the generation, the composed model successfully integrates all components and produces correct multi-step solutions.

\paragraph{Our Contributions}

\begin{itemize}
    \item We introduce a principled framework for composing autoregressive models in probability space, extending projective composition to the sequential setting.
    \item We identify conditions (factorized and joint factorizability) under which composition is provably correct.
    \item We show that composition preserves length-generalizing behavior under uniform factorization and component-model guarantees.
    \item We connect our framework to model merging methods, providing insight into when and why such techniques succeed.
    \item We empirically validate our approach on both synthetic and large language model benchmarks, showing exact behavior in the controlled setting and strong coding gains with retention of mathematical performance in LLM evaluations.

\end{itemize}
\paragraph{Paper organization:}
The rest of the paper is organized as follows. Section~\ref{sec:rel} reviews related work on model merging and mixture-of-experts methods. Section~\ref{sec:main_theory} introduces our compositional framework and presents our main theory. Section~\ref{sec:model_merging} further connects our framework to prior model merging approaches. Section~\ref{sec:expt} presents empirical results on both controlled toy tasks and large language model benchmarks. Finally, we conclude in Section~\ref{sec:conc}.

\section{Related Work}
    \label{sec:rel}
    \paragraph{Model Merging}
Recent research has explored how to combine pretrained models without joint retraining, a paradigm known as model merging or parameter arithmetic. Early approaches focused on simple weight-space interpolation between models fine-tuned on different tasks, showing that linear interpolation can sometimes yield models capable of retaining knowledge from both sources \cite{utans1996weight, wortsman2022model}. A key insight from this line is Task Arithmetic \cite{ilharco2023editing}, which introduced the concept of \textit{task vectors} $\tau_t = \theta(t) - \theta(0)$, where $\theta(t)$ denotes parameters fine-tuned on task $t$ and $\theta(0)$ is the pretrained model. Task vectors are interpreted as directions in parameter space that meaningfully steer model behavior, enabling arithmetic operations across fine-tuned models.

Subsequent work has focused on reducing interference between tasks when performing such parameter arithmetic, for example by choosing merging coefficients more effectively \cite{akiba, yangadamerging, matena2022merging, jindataless}. Another line of research uses sparsification to mitigate task interference \cite{yadav2023ties, yangadamerging, yu2024language, davari2024model}. Other work uses low-rank subspaces and matrix factorization to extract informative components for merging \cite{Gargiulo2025, marczakno, stoica2024model}. More recent geometric approaches perform fine-tuning and merging in the tangent space of the pretrained model manifold, linearizing updates to better capture local structure, amplify weight disentanglement, and mitigate task interference \cite{ortiz-jimenez2023task}.

Our work connects to the model merging literature because we also compose a pretrained base (background) model with task-specific fine-tuned models, similar to Task Arithmetic \cite{ilharco2023editing} and subsequent merging methods. However, unlike parameter-space or task-space techniques, our \name operates in \textit{probability space}: it explicitly combines output distributions of component models while preserving their designated subspaces. This makes the comparison to mixture-of-experts methods useful, although the mechanism is different.

\paragraph{Mixture of Experts in Large Language Models}

Classical Mixture-of-Experts (MoE) architectures augment neural networks with multiple experts and a routing mechanism that selects a subset of experts per input, improving performance and efficiency by allocating capacity adaptively rather than relying on a single monolithic model \cite{jacobs1991adaptive, jordan1994hierarchical, shazeer2017outrageously}. In these settings there is typically no distinguished base model; instead, all experts are trained jointly and symmetrically, and a gating network learns to route inputs to suitable experts to reduce error and computational cost \cite{fedus2022switch, du2022glam}.

In the context of model merging, this idea has inspired \emph{routing-based merging} methods, which replace static, input-agnostic aggregation with adaptive, sample-dependent combinations of models or modules. Instead of using a fixed merged model for all inputs, these approaches dynamically decide how to combine task-specific components at inference time \cite{he2023merging, tang2024merging, lu2024twin, crisostomi2025mass, suntransformer, belofsky2023token, chronopoulou2023adaptersoup, muqeeth2024learning}. 

Our \name differs from both classical MoE and routing-based merging in several key aspects. Unlike MoE's learned gating network or routing-based merging's input-conditioned coefficients, our composition is entirely static: all component models contribute simultaneously without dynamic selection.
Moreover, classical MoE selects entire expert subnetworks via routing; routing-based merging dynamically weights parameter subspaces. We instead perform explicit arithmetic combination of output probability distributions.
While MoE relies on routing to avoid interference, our framework gives conditions under which each component preserves control over its designated output subspace.

Finally, our \name shares conceptual similarity with diffusion model composition methods, where models are composed by linearly combining their scores \cite{du2023reduce,liu2022compositional}. The literature on composing Transformers through their logits is more limited. \cite{logit_1} propose DExperts, which steers a base model by adding and subtracting expert and anti-expert logits in a product-of-experts formulation, enabling controlled generation without retraining. Similarly, \cite{logit_2} introduce contrastive decoding, which selects tokens by maximizing the difference between an expert and a weaker “amateur” model's log-probabilities under plausibility constraints. These approaches demonstrate that simple logit-level composition can effectively modulate generation behavior, motivating our formulation.

\section{Theory: Composing Transformers to Generalize}
    \label{sec:main_theory}
    In this section, we describe our setup. We begin with a toy example that is closely related to the framework of \cite{bradley2025mechanisms}. We then formalize the example introduced in Section \ref{sec:intro}. Finally, we present a result analogous to that of \cite{bradley2025mechanisms} in an abstract feature space. We denote by $p(y_t \mid x, y_{1:t-1})$ the token-wise distribution of a Transformer, which is a conditional distribution over a vocabulary $\mathcal{X} \subseteq \mathbb{R}^n$, where $n$ is the embedding dimension. Consider a collection of transformer models $(p_b, p_1, \dots, p_k)$, where each $p_i$ represents a distinct concept, and $p_b$ represents a common background model. We assume common support, so the ratios below are well-defined whenever a component assigns positive probability. We define a composed transformer $\hat{p}$, said to be a \name of $(p_b,p_1,\dots,p_k)$, by
\begin{equation}
\label{eq:comp_op_auto}
    \hat{p}(y_t| x, y_{1:t-1})
    \coloneqq
    \frac{1}{Z_t(x,y_{<t})}
    p_b(y_t| x, y_{1:t-1})
    \prod_{i=1}^k
    \frac{p_i(y_t| x, y_{1:t-1})}
         {p_b(y_t| x, y_{1:t-1})},
\end{equation}
where $Z_t(x,y_{<t})$ is the normalizing constant over $\mathcal X$. We study the compositional properties of $\hat{p}$. We begin with a simple illustrative example.

\subsection{Composing logits under factorized conditionals}
\label{subsec:toy_ex}
We illustrate the idea with a very simple example of letter-replacement Transformers. Consider three Transformer models $p_1, p_2$, and $p_b$, where the tokenization maps each character to a single token. Suppose $p_b$ is trained to return the same string it was given as an input, i.e., an identity function. Suppose $p_1$ is trained to replace the character ``\textcolor{blue}{A}'' with ``\textcolor{blue}{B}'' in a given string, while leaving all other tokens unchanged. $p_1$ can be thought of as a fine-tuned version of the background $p_b$ that can perform one extra task. Similarly, $p_2$ is trained to replace ``\textcolor{blue}{C}'' with ``\textcolor{blue}{D}'' while leaving all other tokens unchanged.
\begin{align*}
    p_1:\text{A}\to \text{B} && &T_1(\text{SET\textcolor{red}{A}JKHSDJH}) = \text{SET\textcolor{red}{B}JKHSDJH},\\
    p_2:\text{C}\to \text{D} && &T_2(\text{SET\textcolor{red}{C}JKHSDJH}) = \text{SET\textcolor{red}{D}JKHSDJH}.
\end{align*}
If we compose the above Transformers according to Equation \ref{eq:comp_op_auto}, \textbf{will the composed model be able to perform both transformations simultaneously?} In other words, will the composed model replace each ``\textcolor{blue}{A}'' with ``\textcolor{blue}{B}'' and each ``\textcolor{blue}{C}'' with ``\textcolor{blue}{D}'' in a given string? As we will see, this is indeed the case. We now formalize this idea to prove a result similar to \cite{bradley2025mechanisms} for diffusion models.

The idea is to define formally the property that the models and task are such that each model has its own predefined \textbf{range of action} depending on the input, where the model acts independently of the previous tokens. This is captured by the following definition. Here $\mathcal{X}^\ell$ is the space of length-$\ell$ generations.

\begin{definition}[Factorized conditionals]
\label{def:fac_con}
The family of conditional distributions $(p_1,\dots,p_k,p_b)$ on $\mathcal{X}^\ell$ (conditioned on some input $x$) satisfies the \emph{factorized conditionals} property if there exists a partition of token indices of $\mathcal{X}^\ell$
\begin{equation*}
[\ell]=M_1 \uplus M_2 \uplus \cdots \uplus M_k \uplus M_b
\end{equation*}
such that the following conditions hold. For a generation $y=(y_1,\dots,y_\ell)\in\mathcal{X}^\ell$, write $y_M$ for the subsequence indexed by $M$.
Then, for every $i\in\{1,\dots,k\}$:
\begin{enumerate}
    \item $y_{M_i}$ and $y_{M_i^c}$ are independent under $p_i(\cdot| x)$.
    \item $y_{M_b},y_{M_1},\dots,y_{M_k}$ are mutually independent under $p_b(\cdot| x)$.
    \item $p_i(y_{M_i^c}| x)=p_b(y_{M_i^c}| x)$.
\end{enumerate}
Equivalently, the factorization may be written as
\begin{align}
\label{eq:fac_con_factor}
p_i(y| x) &= p_i(y_{M_i}| x)\;p_b(y_{M_i^c}| x), \\
p_b(y| x) &= p_b(y_{M_b}| x)\prod_{i=1}^k p_b(y_{M_i}| x).
\end{align}
\end{definition}
The idea of Definition \ref{def:fac_con} is illustrated in Figure \ref{fig:fac_con}. $p_1$ and $p_2$ have their regions of action, and the composition $\hat{p}$ performs both tasks together. An important question is when to call a composition correct. \cite{bradley2025mechanisms} introduced the notion of \emph{projective composition}, which we adapt for this particular setting.
\begin{definition}[Projective Composition]
\label{def:proj_comp}
    Given a collection of distributions $\{p_i\}$ along with associated “projection” functions $\{\Pi_i:\mathcal{X}^\ell\to\mathbb{R}^m\}$, we call a distribution $\hat{p}$ a projective composition if
    \begin{equation*}
        \forall i: \qquad  \Pi_i\#\hat{p} =\Pi_i\#p_i.
    \end{equation*}
    That is, when $\hat{p}$ is pushed forward by each $\Pi_i$, it yields marginals identical to those of $p_i$.
\end{definition}
Note that here, $\Pi_i$'s act on the entire generation in $\mathcal{X}^\ell$ and return some projection in some space $\mathbb{R}^m$.
Having defined all the tools, we now state our result illustrating that Equation \ref{eq:comp_op_auto} indeed produces a composed model that exhibits the expected behavior.
\begin{theorem}[Correctness of Composition]
\label{thm:main}
    Suppose a set of autoregressive distributions $(p_1,\dots,p_k,p_b)$ conditioned on some input variable $x$, satisfy Definition~\ref{def:fac_con}, with corresponding masks $\{M_i\}_{i=1}^k$ and $M_b$. Consider a sample $y\in \mathbb{R}^{n\times \ell}$ generated by the composed model $\hat{p}(y|x)$ as defined in Equation \ref{eq:comp_op_auto}.
    Then the distribution of sample $y$ generated by $\hat{p}$ is given by
    \begin{equation*}
        \hat{p}(y| x) = p_b(y_{M_b}|x)\prod_{i=1}^k p_i(y_{M_i}|x).
    \end{equation*}
    In particular, the marginal of $\hat p(\cdot|x)$ on $M_i$ equals the marginal of $p_i(\cdot|x)$ on $M_i$ for all $i$, and so $\hat{p}$ is a projective composition with respect to projections $\{\Pi_i(y)=y_{M_i}\}_{i=1}^k$, as per Definition~\ref{def:proj_comp}.
\end{theorem}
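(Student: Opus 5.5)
The plan is to reduce everything to a statement about single next-token conditionals, and then reassemble the joint law with the chain rule. Since $\hat p$ is defined token by token in Equation~\ref{eq:comp_op_auto}, the sequence law it generates is $\hat p(y\mid x)=\prod_{t=1}^{\ell}\hat p(y_t\mid x,y_{<t})$. It therefore suffices to show the following. For every position $t$ and every prefix $y_{<t}$ of positive probability, if $t\in M_i$ then $\hat p(y_t\mid x,y_{<t})=p_i(y_t\mid x,y_{<t}\cap M_i)$, and if $t\in M_b$ then $\hat p(y_t\mid x,y_{<t})=p_b(y_t\mid x,y_{<t}\cap M_b)$. Here $y_{<t}\cap M$ denotes the coordinates of the prefix that lie in $M$. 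Grouping the factors by block and applying the chain rule to each subsequence $y_{M_i}$ under $p_i$, and to $y_{M_b}$ under $p_b$, then yields $p_b(y_{M_b}\mid x)\prod_i p_i(y_{M_i}\mid x)$. The marginal claim follows by summing out the other blocks. Each factor is a normalized distribution in its own variables, so the $M_i$-marginal of $\hat p$ is exactly $p_i(y_{M_i}\mid x)$, and this is the projective composition property for $\Pi_i(y)=y_{M_i}$.

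The first step is to translate the joint factorizations of Definition~\ref{def:fac_con} into statements about next-token conditionals. For $p_b$, mutual independence of the blocks gives $p_b(y_{\le t}\mid x)=\prod_{B} p_b(y_{\le t}\cap B\mid x)$, where the product runs over $B\in\{M_b,M_1,\dots,M_k\}$. Dividing this at $t$ by the same identity at $t-1$ shows that, for $t\in B$, $p_b(y_t\mid x,y_{<t})=p_b(y_t\mid x,y_{<t}\cap B)$.

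For $p_j$, I will use Equation~\ref{eq:fac_con_factor} marginalized to the prefix: $p_j(y_{\le t}\mid x)=p_j(y_{\le t}\cap M_j\mid x)\,p_b(y_{\le t}\cap M_j^c\mid x)$. This marginalization is legitimate because summing over future coordinates respects the product structure. Taking the same ratio gives two cases. If $t\in M_j$, then $p_j(y_t\mid x,y_{<t})=p_j(y_t\mid x,y_{<t}\cap M_j)$. If $t\notin M_j$, then $p_j(y_t\mid x,y_{<t})=p_b(y_t\mid x,y_{<t}\cap M_j^c)$. By the mutual independence of the blocks under $p_b$, this last quantity equals $p_b(y_t\mid x,y_{<t})$.

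The second step plugs these identities into Equation~\ref{eq:comp_op_auto}. Fix $t\in M_i$. For every $j\neq i$ the ratio $p_j/p_b$ at position $t$ equals $1$. The remaining factor is $p_b\cdot p_i/p_b=p_i(y_t\mid x,y_{<t})=p_i(y_t\mid x,y_{<t}\cap M_i)$. This is already a probability distribution over $\mathcal X$, so $Z_t=1$. For $t\in M_b$, every ratio is $1$ and $\hat p$ reduces to $p_b(y_t\mid x,y_{<t}\cap M_b)$.

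I expect the main obstacle to be the first step: passing rigorously from joint independence to the conditional identities, including the fact that $p_j$ conditioned on an $M_j^c$-position ignores the $M_j$ part of the prefix. Zero-probability prefixes also need care. I would handle them using the common-support assumption. That assumption makes each prefix reachable under $\hat p$ have positive probability under every component, so all the ratios above are well defined. Prefixes outside the support contribute zero mass to both sides of the claimed identity, so they do not affect it.
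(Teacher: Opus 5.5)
Your proposal is correct and takes essentially the paper's route: all likelihood ratios cancel token-wise except the active block's, so $Z_t=1$, and the chain rule is then regrouped by blocks. The only difference is that you derive the conditional identities $p_j(y_t\mid x,y_{<t})=p_b(y_t\mid x,y_{<t})$ for $t\notin M_j$ and $p_i(y_t\mid x,y_{<t})=p_i(y_t\mid x,y_{<t}\cap M_i)$ for $t\in M_i$ from the joint factorization, using independence under both $p_j$ and $p_b$; the paper simply asserts these as immediate from Definition~\ref{def:fac_con}.
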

\begin{proof}[Proof sketch]
    By Definition~\ref{def:fac_con}, each $p_i$ coincides with $p_b$ off its mask $M_i$, so at any timestep $t$ exactly one mask (either some $M_j$ or $M_b$) is active. In the product defining $\hat p$, all likelihood ratios cancel except for the active mask, yielding $\hat p(y_t|x,y_{<t})= p_j(y_t |x,y_{<t})$ (or $p_b$ if $t\in M_b$). Hence sampling proceeds by drawing each token from the unique distribution assigned to its mask, implying independence across disjoint masks. Therefore $\hat p(y|x)=p_b(y_{M_b}|x)\prod_{i=1}^k p_i(y_{M_i}|x)$ and the marginals on each $M_i$ match $p_i$, establishing projective composition. For detailed proof, see Appendix \ref{app:main_th}.
\end{proof}
In other words, Theorem \ref{thm:main}  illustrates that whenever this independence among the tasks (Definition \ref{def:fac_con}) holds, we indeed compose correctly.

\subsection{Composing with dependence on the past}
Having illustrated the compositional construction in the simple case of letter substitution in Subsection~\ref{subsec:toy_ex}, we now turn to the motivating example presented in the Introduction. Consider a collection of trigonometric problems of the form \textcolor{blue}{Prove $X = Y$}, where \textcolor{blue}{$X$} and \textcolor{blue}{$Y$} denote trigonometric expressions, and the objective is to establish their equivalence.

Assume that such problems can be partitioned into the following categories:
\begin{enumerate}
    \item Problems solvable via elementary algebraic manipulations.
    \item Problems solvable using the identity $\sin^2(x) + \cos^2(x) = 1$, in conjunction with elementary algebraic manipulations.
    \item Problems solvable using the identity $\sec^2(x) - \tan^2(x) = 1$, in conjunction with elementary algebraic manipulations.
\end{enumerate}

Suppose that a model $p_b$ performs well on problems of type (1), while models $p_1$ and $p_2$ perform well on problems of types (2) and (3), respectively. The question of interest is to determine the conditions under which the composed model
\begin{equation*}
    \hat{p} = \frac{p_1 p_2}{p_b}
\end{equation*}
is effective at solving problems that require the simultaneous application of both identities $\sin^2(x) + \cos^2(x) = 1$ and $\sec^2(x) - \tan^2(x) = 1$. More generally, for $k$ distinct concepts, we seek to characterize when a model $\hat{p}$, defined as in Equation~\ref{eq:comp_op_auto}, is capable of jointly leveraging all relevant components. It is important to note that the theoretical framework developed in Subsection~\ref{subsec:toy_ex} is not directly applicable in this setting. In particular, the behavior of $p_1$ is history-dependent: the applicability of the identity $\sin^2(x) + \cos^2(x) = 1$ at any given step depends on the sequence of transformations that have already been carried out in the evolving proof. To formalize this notion in the present setting, we introduce a property that we term \emph{skill factorizability}. This notion is analogous to the property of factorized conditionals (Definition~\ref{def:fac_con}) introduced in Subsection~\ref{subsec:toy_ex}. From here on, a model $p$ on a task space $T$ is understood to be a model whose inputs lie in the set $T$.

\begin{definition}[Skill factorizability]
For an autoregressive model $p$, a prompt $x\in T$ and continuation $y_{1:\ell} \in \mathcal{X}^\ell$, define
\begin{equation*}
\mathcal L_p(y_{1:\ell} \mid x)
:= \prod_{t=1}^\ell p\!\left(y_t \mid x, y_{<t}\right),
\quad \text{where } y_{<t} = (y_1,\dots,y_{t-1}).
\end{equation*}

We say that a model $p$ (on some task $T$), is \emph{skill factorizable} with respect to a base model $p_b$ if for every prompt $x\in T$, there exist stopping times $\tau_1 \le \tau_2$, such that, for all $\ell$ and all $y_{1:\ell} \in \mathcal{X}^\ell$,
\begin{equation*}
\mathcal L_{p}(y_{1:\ell} \mid x)
=
\Bigg(\prod_{t=1}^{\tau_1} p_{b}(y_t \mid x, y_{<t})\Bigg)
\Bigg(\prod_{t=\tau_1+1}^{\tau_2} p(y_t \mid x, y_{<t})\Bigg)
\Bigg(\prod_{t=\tau_2+1}^{\ell} p_{b}(y_t \mid x, y_{<t})\Bigg),
\end{equation*}
where $\tau_1,\tau_2$ may depend on the realized prefix $y_{<t}$.
\end{definition}

We also impose the following assumption: at any generation step, the task to be performed is uniquely determined by the generated prefix. In particular, there are no prefixes for which multiple distinct skills are simultaneously applicable. For example, in solving a trigonometric problem, if an identity is to be applied at a given step, then it is uniquely determined by the current expression; there are no steps at which multiple distinct identities could be applied. We formalize this in Definition \ref{def:join}.

\begin{definition}[Joint factorizability]
\label{def:join}
Let $T$ be a task space and $(p_b,p_1,\dots,p_k)$ be a collection of autoregressive models. We say that they are \emph{jointly factorizable} on $T$ if, for every prompt $x \in T$, the following hold:
\begin{itemize}
    \item For each $i \in \{1,\dots,k\}$, the model $p_i$ is skill factorizable with respect to $p_b$ on task $T$, with associated stopping times $(\tau_1^i,\tau_2^i)$.
    \item The active index sets $I_i=\{\tau_1^i+1,\dots,\tau_2^i\}$ are pairwise disjoint, i.e.,
    \begin{equation*}
    I_i \cap I_j = \emptyset \quad \text{for all } i \neq j.
    \end{equation*}
\end{itemize}
\end{definition}
We now show that if the models $(p_b,p_1,\dots,p_k)$ satisfy the joint factorizability property (Definition \ref{def:join})  on some task $T$, then the composed model $\hat{p}$ defined by Equation \ref{eq:comp_op_auto} indeed composes the concepts correctly.

\begin{theorem}[Composition under Joint factorizability]
\label{thm:join}
If models $(p_b,p_1,\dots,p_k)$ satisfy joint factorizability on task $T$ with active index sets $I_i=\{\tau_1^i+1,\dots,\tau_2^i\}$, then the composed model $\hat p$ defined by Equation~\eqref{eq:comp_op_auto} satisfies
\begin{equation*}
\mathcal L_{\hat p}(y_{1:\ell}\mid x)
=
\Bigg(\prod_{t\notin \bigcup_{i=1}^k I_i}
p_b(y_t\mid x,y_{<t})\Bigg)
\Bigg(\prod_{i=1}^k \prod_{t\in I_i}
p_i(y_t\mid x,y_{<t})\Bigg).
\end{equation*}
In particular, $\hat p$ applies each skill-specific model $p_i$ on its corresponding interval and agrees with the base model $p_b$ elsewhere.
\end{theorem}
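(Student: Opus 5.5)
The plan is to reduce the sequence-level identity to a token-level one: for every fixed $x\in T$ and every realized prefix $y_{<t}$, show that $\hat p(\cdot\mid x,y_{<t})$ equals $p_i(\cdot\mid x,y_{<t})$ if $t\in I_i$, and equals $p_b(\cdot\mid x,y_{<t})$ if $t\notin\bigcup_i I_i$. Once this holds, the claimed formula for $\mathcal L_{\hat p}(y_{1:\ell}\mid x)$ follows by multiplying the per-step conditionals over $t=1,\dots,\ell$ and grouping the factors according to the disjoint sets $I_1,\dots,I_k$ and their complement.

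\textbf{Step 1: from sequence likelihoods to token conditionals.} For each $i$, skill factorizability gives an identity between $\mathcal L_{p_i}$ and a product mixing $p_b$ and $p_i$ factors. This identity holds for all $\ell$ and all $y_{1:\ell}$. I will compare it at lengths $\ell$ and $\ell-1$ for the same prefix and divide, which is allowed under the common-support assumption whenever the likelihoods are positive. The outcome is that for every $t\notin I_i$,
\[
p_i(y_t\mid x,y_{<t})=p_b(y_t\mid x,y_{<t})\quad\text{for all } y_t .
\]
Here I use that $\tau_1^i,\tau_2^i$ are stopping times. Whether $t\in I_i$ is then determined by $y_{<t}$ alone, so the membership is well defined at step $t$ before $y_t$ is drawn. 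This is the step I expect to be the main obstacle. One must argue that the event $\{t\le\tau_1^i\}\cup\{t>\tau_2^i\}$ depends only on the prefix. The stopping-time convention has to be read so that ``$t\in I_i$'' is measurable with respect to $y_{<t}$; if it were not, the per-token identity could fail for particular $y_t$. I will make this reading explicit. I will also treat zero-probability prefixes separately, noting that the identity is trivial there since both sides vanish.

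\textbf{Step 2: cancellation in the composition.} Fix $t$ and a prefix $y_{<t}$. By the disjointness in Definition~\ref{def:join}, at most one index $j$ has $t\in I_j$. For every $i\neq j$, Step 1 makes the ratio $p_i/p_b$ identically equal to $1$ on $\mathcal X$. The product in Equation~\eqref{eq:comp_op_auto} therefore collapses to $p_b\cdot p_j/p_b=p_j(\cdot\mid x,y_{<t})$, or to $p_b$ if no such $j$ exists. In either case the unnormalized expression is already a probability distribution, so $Z_t(x,y_{<t})=1$. This gives the token-level claim, which mirrors the mask-wise cancellation in the proof of Theorem~\ref{thm:main}.

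\textbf{Step 3: assembly.} Taking the product over $t$ gives
\[
\mathcal L_{\hat p}(y_{1:\ell}\mid x)=\prod_{t=1}^{\ell}\hat p(y_t\mid x,y_{<t}).
\]
Substituting the token-level identity into each factor and grouping yields exactly the stated formula. This establishes that $\hat p$ uses $p_i$ on $I_i$ and $p_b$ elsewhere.
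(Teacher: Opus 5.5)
Your proposal is correct and follows essentially the same route as the paper: for each prefix, all likelihood ratios cancel except the at most one active index, so the normalizer is $1$ and $\hat p(\cdot\mid x,y_{<t})$ equals $p_i$ on $I_i$ and $p_b$ elsewhere; multiplying over $t$ then gives the claim. Your Step~1 adds something the paper's proof simply asserts when it writes $p_{j,t}=p_{b,t}$. It derives the equality $p_i(\cdot\mid x,y_{<t})=p_b(\cdot\mid x,y_{<t})$ for all tokens and all $t\notin I_i$ from the sequence-level definition, by dividing likelihoods at lengths $t$ and $t-1$ and using that $\{t\in I_i\}$ is determined by $y_{<t}$. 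The equality must hold for all tokens, not just the realized one, for $Z_t=1$ to follow.
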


\begin{proof}[Proof sketch]
At each timestep $t$, the unnormalized composed score is
\begin{equation*}
\tilde p(y_t\mid x,y_{<t})
=
p_b(y_t\mid x,y_{<t})\prod_{i=1}^k \frac{p_i(y_t\mid x,y_{<t})}{p_b(y_t\mid x,y_{<t})}.
\end{equation*}
By joint factorizability, for any prefix $y_{<t}$ there is at most one index $i$ such that $t\in I_i$. For all other $j\neq i$, we have
\begin{equation*}
p_j(y_t\mid x,y_{<t}) = p_b(y_t\mid x,y_{<t}),
\end{equation*}
so all likelihood ratios cancel except possibly one. The resulting score is already normalized, and hence
\begin{equation*}
\hat p(y_t\mid x,y_{<t})
=
\begin{cases}
p_i(y_t\mid x,y_{<t}), & t\in I_i,\\
p_b(y_t\mid x,y_{<t}), & \text{otherwise}.
\end{cases}
\end{equation*}
Multiplying over $t$ yields the desired factorization of $\mathcal L_{\hat p}$, showing that each model $p_i$ is applied exactly on its interval and $p_b$ elsewhere. See Appendix \ref{app:main_th} for details.
\end{proof}

\subsection{Composing logits under factorized conditionals in feature space}
Inspired by the result of \cite{bradley2025mechanisms} on diffusion models in feature space, we now present a version of Theorem \ref{thm:main} in feature space. The idea is that the composition $\hat{p}$ (Equation \ref{eq:comp_op_auto}) will compose correctly if the factorized conditionals property (Definition \ref{def:fac_con}) holds in some feature space. This may correspond to composition of two writing styles, where it is not necessary that each model has its own domain of tokens, but such factorization may exist in some abstract feature space.

\begin{theorem}[Feature-space Composition]
\label{thm:main_2}
Given a collection of autoregressive models $(p_1,\dots,p_k,p_b)$ conditioned on an input variable $x$, suppose their sequence-level laws admit densities with respect to a common base measure and have common support. Suppose further that there exists an invertible $C^1$ map $\mathcal{D}$ on the feature space such that the pushforward distributions
\begin{equation*}
\{\mathcal{D}\#\mathcal{L}_{p_1},\dots,\mathcal{D}\#\mathcal{L}_{p_k},\mathcal{D}\#\mathcal{L}_{p_b}\}
\end{equation*}
satisfy Definition~\ref{def:fac_con} with corresponding partition $(M_1,\dots,M_k,M_b)$. Then the composition $\hat p$ defined in Equation~\ref{eq:comp_op_auto} satisfies
\begin{equation*}
\mathcal{D}\#\mathcal{L}_{\hat p}
=
(\mathcal{D}\#\mathcal{L}_{p_b})_{M_b}
\prod_{i=1}^k (\mathcal{D}\#\mathcal{L}_{p_i})_{M_i}.
\end{equation*}
Therefore, $\hat p$ is a projective composition of $(p_1,\dots,p_k,p_b)$ with respect to the projection functions $\Pi_i(y)=\mathcal{D}(y)_{M_i}$ for $i=1,\dots,k$.
\end{theorem}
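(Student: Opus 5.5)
The plan is to reduce the statement to the algebra behind Theorem~\ref{thm:main}, carried out in feature coordinates $z=\mathcal{D}(y)$. Two facts make this work. First, the Jacobian of a change of variables cancels in the likelihood ratios $p_i/p_b$. Second, the factorized-conditionals structure (Definition~\ref{def:fac_con}) makes the product $q_b\prod_i q_i/q_b$ already normalized.

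\textbf{Step 1 (sequence-level form of $\hat p$).} Write $\mathcal{L}_{\hat p}$ as the product of the token-wise composed conditionals from Equation~\ref{eq:comp_op_auto}. Multiplying over $t$ and telescoping the chain rule gives
\begin{equation*}
\mathcal{L}_{\hat p}(y\mid x)=\frac{1}{\prod_t Z_t(x,y_{<t})}\,\mathcal{L}_{p_b}(y\mid x)\prod_{i=1}^k\frac{\mathcal{L}_{p_i}(y\mid x)}{\mathcal{L}_{p_b}(y\mid x)}.
\end{equation*}
I will carry the prefactor $\prod_t Z_t$ along and deal with it in Step~4. Common support guarantees that every ratio is well defined.

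\textbf{Step 2 (change of variables).} Set $q_j:=\mathcal{D}\#\mathcal{L}_{p_j}$. Since $\mathcal{D}$ is an invertible $C^1$ map and the laws have densities with respect to a common base measure, we have
\begin{equation*}
q_j(z)=\mathcal{L}_{p_j}(\mathcal{D}^{-1}z)\,\bigl|\det J_{\mathcal{D}^{-1}}(z)\bigr|.
\end{equation*}
For a counting base measure the Jacobian factor is simply $1$. The Jacobian is the same for every $j$, so $q_i(z)/q_b(z)=\mathcal{L}_{p_i}(y)/\mathcal{L}_{p_b}(y)$ with $y=\mathcal{D}^{-1}z$. Pushing the sequence-level product forward therefore gives the same product of the $q$'s:
\begin{equation*}
\mathcal{D}\#\Bigl(\mathcal{L}_{p_b}\prod_i \tfrac{\mathcal{L}_{p_i}}{\mathcal{L}_{p_b}}\Bigr)=q_b\prod_i\frac{q_i}{q_b}.
\end{equation*}
In other words, the operator of Equation~\ref{eq:com_op} commutes with diffeomorphic pushforward.

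\textbf{Step 3 (factorization in feature space).} By hypothesis, $(q_1,\dots,q_k,q_b)$ satisfy Definition~\ref{def:fac_con}. Equation~\ref{eq:fac_con_factor} gives $q_i/q_b=q_i(z_{M_i})/q_b(z_{M_i})$. Combining this with the mutual independence of the blocks under $q_b$, the product collapses to
\begin{equation*}
q_b(z_{M_b})\prod_i q_i(z_{M_i}).
\end{equation*}
This is exactly the computation used in the proof of Theorem~\ref{thm:main}. The right-hand side is a product of probability densities on disjoint coordinate blocks, so it integrates to $1$. Projectivity then follows by marginalizing: integrating out every block except $M_i$ shows $\Pi_i\#\hat p=(q_i)_{M_i}=\Pi_i\#p_i$.

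\textbf{Step 4 (main obstacle: the normalizers).} The remaining issue is to show that the token-level normalizers cause no discrepancy, i.e.\ $\prod_t Z_t\equiv 1$, or equivalently that the token-wise composition in Equation~\ref{eq:comp_op_auto} induces the sequence-level composition.

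\begin{itemize}
\item Step~3 already shows that the unnormalized sequence-level product has total mass $1$.
\item Every token-wise $Z_t$ satisfies $Z_t\ge 0$, and the token-wise composed conditionals define a genuine law.
\item I would first try an induction over $t$ to show each $Z_t=1$. The idea is to marginalize the sequence-level identity over $y_{>t}$: the ratio of the composed prefix marginals at lengths $t$ and $t-1$ should reproduce the unnormalized token-wise score with total mass $1$.
\end{itemize}

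This step is delicate. In feature space the blocks $M_i$ need not align with token positions, so the token-wise ratios do not cancel positionwise as they did in Theorem~\ref{thm:main}. If the induction fails, the fallback is to state and prove the result for the sequence-level composition $\mathcal{L}_{p_b}\prod_i\mathcal{L}_{p_i}/\mathcal{L}_{p_b}$, as in \cite{bradley2025mechanisms}. I would then note that this composition coincides with the token-wise $\hat p$ exactly when $\prod_t Z_t=1$.
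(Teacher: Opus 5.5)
Your Steps 1--3 follow the paper's argument. The paper rewrites $\hat p$ at the sequence level ``by a simple application of the chain rule'', invokes the parameterization-independence lemma of \cite{bradley2025mechanisms} (your Step 2 verifies it directly via Jacobian cancellation), and then applies Theorem~\ref{thm:main} in feature coordinates. You located the gap correctly at Step 4, but the induction you propose cannot succeed. The hypotheses do not imply $\prod_t Z_t\equiv 1$, and the conclusion itself fails for the token-wise $\hat p$ of Equation~\ref{eq:comp_op_auto}. Here is a counterexample.

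Take $\ell=2$, $\mathcal X=\{0,1\}$ and $\mathcal D(y_1,y_2)=(y_1\oplus y_2,\,y_2)$. This is a bijection of a finite set, and it extends to a diffeomorphism of $\mathbb R^2$ if you insist on $C^1$. Let $M_1=\{1\}$, $M_2=\{2\}$; append an independent uniform token if you want $M_b\neq\emptyset$. Let $q_b$ be uniform, $q_1(z)=\tfrac12\alpha(z_1)$ and $q_2(z)=\tfrac12\gamma(z_2)$, with $\alpha(1)=\gamma(1)=0.9$. These satisfy Definition~\ref{def:fac_con} with full support. Pulling back to tokens gives $p_1(y_1)=p_2(y_1)=p_b(y_1)=\tfrac12$, $p_1(y_2\mid y_1)=\alpha(y_1\oplus y_2)$, $p_2(y_2\mid y_1)=\gamma(y_2)$ and $p_b(y_2\mid y_1)=\tfrac12$. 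Three things then go wrong:
\begin{enumerate}
\item $\hat p(y_1=1)=\tfrac12$, whereas the claimed law $\alpha(z_1)\gamma(z_2)$ gives $\Pr(y_1=1)=\Pr(z_1\oplus z_2=1)=0.18$.
\item At the second step, $Z_2(y_1)=2\sum_{y_2}\alpha(y_1\oplus y_2)\gamma(y_2)$ equals $1.64$ for $y_1=0$ and $0.36$ for $y_1=1$.
\item Even projectivity fails, since $\Pr_{\hat p}(z_1=1)\approx 0.744\neq 0.9$.
\end{enumerate}

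The claim in your induction that breaks is that marginalizing the sequence-level identity over $y_{>t}$ reproduces the token-wise score. Prefix marginals of $\mathcal L_{p_b}\prod_i\mathcal L_{p_i}/\mathcal L_{p_b}$ are not the corresponding products of ratios of prefix marginals. The token-wise rule cannot anticipate that the experts' constraints on later tokens should jointly skew earlier ones.

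The paper's proof has the same hole. Its chain-rule identity $\hat p(y_{1:T}\mid x)=p_b(y_{1:T}\mid x)\prod_i p_i(y_{1:T}\mid x)/p_b(y_{1:T}\mid x)$ silently discards $\prod_t Z_t$. That is harmless in Theorems~\ref{thm:main} and~\ref{thm:join}, where $Z_t\equiv1$ by positionwise cancellation, but not here. So your fallback, stating the result for the sequence-level composition, is exactly what the paper's argument actually proves.

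To recover the token-wise statement, you need an extra hypothesis that restores positionwise cancellation. One option is to require $\mathcal D$ to be causal: each $z_t$ depends only on $y_{\le t}$ and is a bijection of $y_t$ for fixed $y_{<t}$. Then $p(y_t\mid y_{<t})=q(z_t\mid z_{<t})$ for every model, Theorem~\ref{thm:main} applies verbatim in $z$-coordinates, and $Z_t\equiv 1$.
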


Similar to \cite{bradley2025mechanisms}, we utilized the  1-homogeneous property of composition operator defined in Equation \ref{eq:com_op} to prove that it is independent of  parameterization. Theorem \ref{thm:main_2} follows directly as a consequence. See Appendix \ref{app:main_th} for details.

\subsection{Length Generalization}
Length generalization refers to the ability of Transformer models to successfully perform a given task on input sequences that are longer than those encountered during training. \cite{zhou2024what} conjectured that a Transformer can achieve length generalization for a particular algorithmic task if there exists a sufficiently simple implementation of that algorithm within the RASP (Restricted Access Sequence Processing) framework. Our results are stated for arbitrary $\ell$. Consequently, if the hypotheses of Theorems~\ref{thm:main}, \ref{thm:join}, or \ref{thm:main_2} hold uniformly at a target length and each component model performs its corresponding task at that length, then the composed model $\hat p$ inherits the corresponding behavior on the composed task. This is a conditional transfer statement: the theory does not by itself prove that arbitrary pretrained or fine-tuned Transformers length-generalize, nor that approximate factorization must persist outside the lengths on which the components were trained.

\section{Model Merging}
    \label{sec:model_merging}
    Model merging refers to the process of combining models trained on distinct tasks into a single model that achieves strong performance across all tasks, ideally matching or exceeding that of the individual task-specific experts. Various approaches to model merging have been proposed (see Section \ref{sec:rel}); however, the underlying reasons for their empirical success remain poorly understood. In this section, we present insights suggesting that our results may help explain why such methods are effective in practice, and how they relate to prior attempts at theoretical justification \citep{ortiz-jimenez2023task}. We begin with a simplified setting to illustrate the core intuition in Subsection \ref{subsec:ta_lin}, and subsequently extend the discussion to a more general framework in Subsection \ref{subsec:ta}. In particular, we consider \emph{task arithmetic}, a model merging technique defined as follows. Let $f_b:\mathbb{R}^n \to \mathbb{R}^m$ denote a base neural network trained on an initial task. This base model is then fine-tuned on multiple tasks, yielding a collection of expert models $f_1, f_2, \dots, f_k$. Let $\theta_b$ denote the parameter vector of $f_b$, and let $\theta_1, \theta_2, \dots, \theta_k$ denote the parameter vectors corresponding to $f_1, f_2, \dots, f_k$, respectively. Task arithmetic constructs a merged model $f_{\mathrm{merged}}$ with parameters given by
\begin{equation}
\label{eq:ta}
    \theta_{\mathrm{merged}} \coloneq  \theta_b + \sum_{i=1}^k (\theta_i - \theta_b).
\end{equation}
Note that the formula for merging weights in task arithmetic is the same as the formula for merging scores (or $\log$ probabilities of tokens) in our composition (Equation \ref{eq:comp_op_auto}). 
\subsection{Task Arithmetic: Linear Case}
\label{subsec:ta_lin}
We present the idea under the simplifying assumption that the models 
$f_b, f_1, \dots, f_k$ are linear maps from $\mathbb{R}^n$ to $\mathbb{R}^m$. 
In particular, for each $i$, $f_i(x) \in \mathbb{R}^m$. We assume that these models satisfy \emph{factorized conditionals}, which in this case we call Disjoint coordinate factorization, meaning that each model $f_i$ differs from the base model $f_b$ only on a disjoint subset of output coordinates.

\begin{definition}[Disjoint coordinate factorization]
\label{def:disjoint}
Let $f_b, f_1, \dots, f_k : \mathbb{R}^n \to \mathbb{R}^m$ be linear maps. 
We say they satisfy disjoint coordinate factorization if there exist masks 
$M_1, \dots, M_k \in \{0,1\}^m$ such that:

\begin{enumerate}
    \item (Disjointness of the masks) For all $i \neq j$ and all $r \in \{1,\dots,m\}$,
    \begin{equation*}
    M_i^{(r)} = 1 \;\Rightarrow\; M_j^{(r)} = 0.
    \end{equation*}

    \item (Localized deviation) For every $i \in \{1,\dots,k\}$ and all 
    $x \in \mathbb{R}^n$,
    \begin{equation*}
    (f_i(x))^{(r)} = (f_b(x))^{(r)} 
    \quad \text{ whenever }  M_i^{(r)} = 0.
    \end{equation*}
\end{enumerate}
Equivalently, each difference $f_i - f_b$ is supported only on the 
coordinates selected by $M_i$.
\end{definition}
If the models $f_b, f_1, \dots, f_k$ satisfy Definition~\ref{def:disjoint}, then the merged model $f_{\mathrm{merged}}$ defined in Equation~\ref{eq:ta} produces outputs that coincide with each expert model on its corresponding specialized coordinates.

This leads to the following key observation: \textbf{combining weights according to task arithmetic is equivalent to combining outputs according to our composition}, at least in the linear setting.

\subsection{Task Arithmetic: General Case}
\label{subsec:ta}

\cite{ortiz-jimenez2023task} show that task arithmetic can succeed when each task vector affects only its own region of the input space. More precisely, let $\mathcal{T}=\{\vt_t\}_{t\in[T]}$ be a collection of task vectors, and let $\mathcal{D}=\{\mathcal{D}_t\subset \mathcal{X}\}_{t\in[T]}$ be a collection of pairwise disjoint task supports, that is $
\mathcal{D}_t \cap \mathcal{D}_{t'}=\varnothing $ {for } $t\neq t'$.
We say that a network $f$ satisfies the task arithmetic property around $\vth_0$ with respect to $\mathcal{T}$ and $\mathcal{D}$ if, for every choice of coefficients $(\alpha_1,\dots,\alpha_T)\in \mathcal{A}\subseteq \mathbb{R}^T$,
\begin{equation*}
f\!\left(\vx;\vth_0+\sum_{t=1}^T \alpha_t\,\vt_t\right)
=
\begin{cases}
f(\vx;\vth_0+\alpha_t\,\vt_t), & \vx\in\mathcal{D}_t,\\
f(\vx;\vth_0), & \vx\notin\bigcup_{t=1}^T \mathcal{D}_t.
\end{cases}
\end{equation*}

In other words, when the input belongs to the support of task $t$, the merged model behaves exactly as if only the corresponding task vector $\vt_t$ had been added; on inputs outside all task supports, it reduces to the base model. This is the nonlinear analogue of the disjoint-coordinate condition discussed in Subsection~\ref{subsec:ta_lin}: in both cases, the different task updates do not interfere with one another. Thus, successful task arithmetic can be interpreted as a parameter-space realization of the same additive composition principle that our method applies directly to conditional output distributions: each component contributes only on its own region of action, while the base model governs the remaining behavior.

\section{Experiments}
    \label{sec:expt}
    We empirically validate the proposed framework in two stages. First, we revisit the controlled toy setting from Subsection~\ref{subsec:toy_ex} to verify that the composition rule behaves as intended in a fully transparent example. We observe that our \name performs perfectly on this task. The details of this experiment are provided in Appendix~\ref{app:expt}. Second, we evaluate the method on downstream language tasks by composing the base model $p_{\text{base}}$ given by \texttt{Gemma 2 2B} \citep{gemma2} and its two task-specific fine-tuned versions from \texttt{MergeBench} \citep{mergebench}, one specialized for mathematics ($p_{\text{math}}$) and one specialized for coding ($p_{\text{coding}}$). We then measure the performance of the resulting model $\hat{p}$ obtained by composing $(p_{\text{base}},p_{\text{math}},p_{\text{coding}})$ via Equation~\ref{eq:comp_op_auto}, on four benchmarks: two coding tasks (\texttt{HumanEval+} \citep{HumanEval} and \texttt{MBPP+} \citep{MBPP}) and two mathematics tasks (\texttt{GSM8K} \citep{GSM8k} and \texttt{MATH} \citep{MATH}), and compare it to the performance of the baselines $(p_{\text{base}},p_{\text{math}}, p_{\text{coding}})$. The results are reported in Table~\ref{tab:expt_2}. Additional details about the evaluation protocol are provided in Appendix~\ref{app:expt}. Overall, these results show that \name can combine complementary fine-tuned models: it improves over the coding expert on both coding benchmarks while retaining much of the math expert's performance on mathematical benchmarks. We additionally experimented with composing more than two expert models, but observed a sharp degradation in generation quality due to instability around end-of-sequence behavior; further discussion is provided in Appendix~\ref{app:expt}.

\begin{table}[h]
\centering
\caption{Performance of $\hat{p}$ on downstream benchmarks compared to baselines.}
\label{tab:expt_2}
\begin{tabular}{lcccc}
\hline
\textbf{Benchmark} & $p_{\text{base}}$ & $p_{\text{coding}}$ & $p_{\text{math}}$ & $\hat{p}$\\
\hline
\texttt{GSM8K}     &  3.2  &  3.2  &  \textbf{50.2} & 41.2 \\
\texttt{MATH}      &  16.3  & 16.7   & \textbf{25.5}   & 24.2 \\
\texttt{HumanEval+}& 3.7   &  24.4  & 13.4   & \textbf{30.5} \\
\texttt{MBPP+}     &  33.9  & 38.1   & 32.8   & \textbf{39.4} \\
\hline
\end{tabular}
\end{table}

\paragraph{Hardware and code availability.}
Experiments were conducted on two NVIDIA H100 GPUs with 94 GB of memory each. The implementation is provided in the supplementary material to support reproducibility during the review process, and the code is available at \href{https://github.com/Aakash-verse/Compositional-Generalization-in-Autoregressive-Models-via-Logit-Composition}{GitHub repository}.

\section{Conclusion}
    \label{sec:conc}
    We introduced a probability-space framework for composing autoregressive Transformer models, extending the notion of projective composition beyond diffusion models to sequential generation. Our approach provides a principled way to combine specialized models with a shared background model via a simple multiplicative rule over conditional distributions. We established conditions under which this composition is correct, both in token-space and in a more general feature-space formulation, and showed that these guarantees extend to settings with prefix-dependent behavior. Importantly, we showed that composition preserves length-generalizing behavior when the factorization assumptions and component-model guarantees hold uniformly at the target length. Further, we connected our framework to existing model merging methods and showed that several empirical heuristics can be explained in terms of our composition principle. Finally, experiments on synthetic tasks and large language model benchmarks show exact behavior in the controlled setting and evidence of complementary skill transfer, with coding gains and retention of mathematical performance. Overall, this work suggests that autoregressive model composition can be understood as a structured probabilistic operation with strong theoretical guarantees, opening the door to more modular and controllable construction of large language models.

\paragraph{Limitations.}
To explain the compositional generalization abilities of our \name, we rely on factorization assumptions that may not fully hold in practice. While these assumptions help clarify why the composition succeeds when they are satisfied, they do not explain why pretraining—a process that may introduce interference—would lead to representations where such assumptions are approximately valid. In addition, the empirical evaluation is limited in scope, relying on a single base model and lacking strong comparisons with model merging alternatives.


\bibliographystyle{plainnat}
\bibliography{References}
\newpage


\appendix

\section{Notation}
    \label{app:notation}
    Table~\ref{tab:notation} summarizes the main notation used in Section~\ref{sec:main_theory} and in the proofs of \Cref{app:main_th}.

\begin{table}[h]
\centering
\small
\renewcommand{\arraystretch}{1.08}
\begin{tabular}{@{}p{0.25\linewidth}p{0.65\linewidth}@{}}
\toprule
Symbol & Meaning \\
\midrule
$\mathcal X$ & Token vocabulary; $\mathcal X^\ell$ is the space of length-$\ell$ generations. \\
$x$ & Conditioning input or prompt. \\
$y_{1:\ell}$, $y_t$, $y_{<t}$ & A generated sequence, its token at position $t$, and the prefix before $t$. \\
$p_b$ & Background or base autoregressive model. \\
$p_i$ & The $i$th specialized autoregressive model, for $i\in\{1,\dots,k\}$. \\
$\hat p$ & Composed autoregressive model defined by Equation~\ref{eq:comp_op_auto}. \\
$Z_t(x,y_{<t})$ & Normalizing constant for the composed conditional distribution at step $t$. \\
$\mathcal L_p(y_{1:\ell}\mid x)$ & Sequence-level likelihood induced by an autoregressive model $p$. \\
$M_i$, $M_b$ & Token-index masks partitioning $[\ell]$ into expert and background coordinates. \\
$y_{M_i}$ & Subsequence of $y$ restricted to indices in $M_i$. \\
$\Pi_i$ & Projection map used to define projective composition. \\
$\Pi_i\#p$ & Pushforward of distribution $p$ through projection $\Pi_i$. \\
$\tau_1^i,\tau_2^i$ & Stopping times delimiting the active region of skill $i$. \\
$I_i$ & Active interval $\{\tau_1^i+1,\dots,\tau_2^i\}$ for skill $i$. \\
$\mathcal D$ & Invertible feature map used in the feature-space theorem. \\
$\mathcal C[\cdot]$ & Distribution-level composition operator used in Appendix~\ref{app:main_th}. \\
\bottomrule
\end{tabular}
\caption{Notation used in the theoretical results and proofs.}
\label{tab:notation}
\end{table}

\section{Main Theory}
    \label{app:main_th}
    
In this section, we provide rigorous proof of all the results in the paper. We start with Theorem \ref{thm:main}.

\begin{proof}[Proof of Theorem \ref{thm:main}]
Fix the conditioning input $x$. For each position $t\in[\ell]$, let $m(t)\in\{b,1,\dots,k\}$ denote the unique index such that $t\in M_{m(t)}$. Since $\{M_1,\dots,M_k,M_b\}$ is a partition of $[\ell]$, such an index is well defined.

By Definition~\ref{def:fac_con}, for every $i\in\{1,\dots,k\}$, the distribution $p_i$ agrees with $p_b$ on all coordinates outside $M_i$. Equivalently, for every $t\notin M_i$,
\begin{equation*}
p_i(y_t \mid x, y_{1:t-1}) = p_b(y_t \mid x, y_{1:t-1}).
\end{equation*}
Therefore, if $t\in M_j$ for some $j\in\{1,\dots,k\}$, then
\begin{equation*}
\frac{p_i(y_t \mid x, y_{1:t-1})}{p_b(y_t \mid x, y_{1:t-1})}
=
\begin{cases}
1, & i\neq j,\\
\dfrac{p_j(y_t \mid x, y_{1:t-1})}{p_b(y_t \mid x, y_{1:t-1})}, & i=j.
\end{cases}
\end{equation*}
Hence the unnormalized composition score simplifies to
\begin{align*}
\tilde p(y_t \mid x, y_{1:t-1})
&=
p_b(y_t \mid x, y_{1:t-1})
\prod_{i=1}^k
\frac{p_i(y_t \mid x, y_{1:t-1})}{p_b(y_t \mid x, y_{1:t-1})}
\\&=
\begin{cases}
p_j(y_t \mid x, y_{1:t-1}), & t\in M_j,\ j\in\{1,\dots,k\},\\
p_b(y_t \mid x, y_{1:t-1}), & t\in M_b.
\end{cases}
\end{align*}
The right-hand side is already a normalized conditional distribution over the token vocabulary, so the normalizing constant in Equation~\ref{eq:comp_op_auto} equals one at this prefix.

Thus, at each position $t$, the composed model uses exactly the conditional distribution associated with the unique block containing $t$.

Now apply the chain rule:
\begin{equation*}
\hat p(y\mid x)
=
\prod_{t=1}^{\ell}\hat p(y_t\mid x,y_{1:t-1}).
\end{equation*}
Substituting the expression above yields
\begin{equation*}
\hat p(y\mid x)
=
\Bigg(\prod_{t\in M_b} p_b(y_t\mid x,y_{1:t-1})\Bigg)
\prod_{i=1}^k
\Bigg(\prod_{t\in M_i} p_i(y_t\mid x,y_{1:t-1})\Bigg).
\end{equation*}
By the factorized conditionals property, this equals
\begin{equation*}
\hat p(y\mid x)=p_b(y_{M_b}\mid x)\prod_{i=1}^k p_i(y_{M_i}\mid x).
\end{equation*}
In particular, for each $i$, the marginal of $\hat p$ on $M_i$ coincides with $p_i(\cdot\mid x)$. Therefore $\hat p$ is a projective composition with respect to the projections $\Pi_i(y)=y_{M_i}$, as per Definition~\ref{def:proj_comp}.
\end{proof}

We now come to Theorem \ref{thm:join}.

\begin{proof}[Proof of Theorem \ref{thm:join}]
Fix an arbitrary prompt $x\in T$ and an arbitrary continuation $y_{1:\ell}\in\mathcal X^\ell$. 
For notational simplicity, write
\begin{equation*}
p_{i,t} := p_i(y_t\mid x,y_{<t}),
\qquad
p_{b,t} := p_b(y_t\mid x,y_{<t}),
\qquad
\hat p_t := \hat p(y_t\mid x,y_{<t}).
\end{equation*}
Assume that the ratios $p_{i,t}/p_{b,t}$ are well-defined on the common support.

By definition of the composed model, before normalization,
\begin{equation*}
\tilde p_t
=
p_{b,t}\prod_{i=1}^k \frac{p_{i,t}}{p_{b,t}}.
\end{equation*}

Let
\begin{equation*}
I_i := \{\tau_1^i+1,\tau_1^i+2,\dots,\tau_2^i\}
\end{equation*}
denote the active interval of skill $i$. Since the intervals $\{I_i\}_{i=1}^k$ are pairwise disjoint, for every $t$ there is at most one index $i$ such that $t\in I_i$.

If $t\in I_i$ for some $i$, then for every $j\neq i$, we are outside the active interval of $p_j$, hence
\begin{equation*}
p_{j,t}=p_{b,t}.
\end{equation*}
Therefore,
\begin{equation*}
\tilde p_t
=
p_{b,t}\left(\frac{p_{i,t}}{p_{b,t}}\right)\prod_{j\neq i}\frac{p_{j,t}}{p_{b,t}}
=
p_{b,t}\left(\frac{p_{i,t}}{p_{b,t}}\right)
=
p_{i,t}.
\end{equation*}

If instead $t\notin \bigcup_{i=1}^k I_i$, then for all $i$ we have $p_{i,t}=p_{b,t}$, and hence
\begin{equation*}
\tilde p_t
=
p_{b,t}\prod_{i=1}^k \frac{p_{b,t}}{p_{b,t}}
=
p_{b,t}.
\end{equation*}

In both cases the unnormalized score is already a normalized conditional distribution, so the normalizing constant in Equation~\ref{eq:comp_op_auto} equals one. Thus, for every $t\in\{1,\dots,\ell\}$,
\begin{equation*}
\hat p(y_t\mid x,y_{<t})
=
\begin{cases}
p_i(y_t\mid x,y_{<t}), & t\in I_i \text{ for some } i,\\
p_b(y_t\mid x,y_{<t}), & t\notin \bigcup_{i=1}^k I_i.
\end{cases}
\end{equation*}

Multiplying over $t=1,\dots,\ell$ yields
\begin{equation*}
\mathcal L_{\hat p}(y_{1:\ell}\mid x)
=
\prod_{t=1}^\ell \hat p(y_t\mid x,y_{<t})
=
\Bigg(\prod_{t\notin \cup_{i=1}^k I_i} p_b(y_t\mid x,y_{<t})\Bigg)
\Bigg(\prod_{i=1}^k \prod_{t\in I_i} p_i(y_t\mid x,y_{<t})\Bigg).
\end{equation*}

This is exactly the claimed factorization, and shows that $\hat p$ applies each skill-specific model $p_i$ on its interval and $p_b$ elsewhere.
\end{proof}

We now come to the results in the abstract space. \cite{bradley2025mechanisms} utilized the  1-homogeneous property of composition operator defined in Equation \ref{eq:com_op} to prove that it is independent of  parameterization for distributions with densities on a common support. In particular, they proved the following lemma.
\begin{lemma}
\label{lem:par_ind}
    The composition operator defined in Equation \ref{eq:com_op} is independent of  parameterization, i.e.,
    \begin{equation*}
        \mathcal{C}[\mathcal{D}\#p_1,\dots,\mathcal{D}\#p_k,\mathcal{D}\#p_b] = 
        \mathcal{D}\#\mathcal{C}[p_1,\dots,p_k,p_b].
    \end{equation*}
\end{lemma}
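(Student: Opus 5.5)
The plan is to compute densities via the change-of-variables formula and check that the Jacobian factors cancel because $\mathcal{C}$ is 1-homogeneous. We work with densities with respect to the common base measure (Lebesgue measure on the feature space, where $\mathcal{D}$ is an invertible $C^1$ map with $C^1$ inverse) and on the common support. We interpret $\mathcal{C}[q_1,\dots,q_k,q_b]$ as the normalized density
\begin{equation*}
\frac{1}{Z}\, q_b \prod_{i=1}^k \frac{q_i}{q_b},
\qquad Z=\int q_b\prod_i \frac{q_i}{q_b}.
\end{equation*}

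First, write the pushforward densities explicitly. For $z=\mathcal{D}(y)$ we have
\begin{equation*}
(\mathcal{D}\#p)(z) = p(\mathcal{D}^{-1}(z))\,J(z),
\qquad J(z):=\bigl|\det \nabla \mathcal{D}^{-1}(z)\bigr|,
\end{equation*}
where $J$ is the same factor for every $p$ in the collection. Substituting into the unnormalized operator gives
\begin{equation*}
(p_b\circ\mathcal{D}^{-1})\,J\prod_{i=1}^k \frac{(p_i\circ\mathcal{D}^{-1})\,J}{(p_b\circ\mathcal{D}^{-1})\,J}.
\end{equation*}
All Jacobian factors in the ratios cancel, leaving
\begin{equation*}
J(z)\cdot\Bigl(p_b\prod_i \tfrac{p_i}{p_b}\Bigr)(\mathcal{D}^{-1}(z)).
\end{equation*}
This is exactly the 1-homogeneity: the map $(q_1,\dots,q_k,q_b)\mapsto q_b\prod_i q_i/q_b$ satisfies $\mathcal{C}(cq_1,\dots,cq_b)=c\,\mathcal{C}(q_1,\dots,q_b)$ pointwise, since its total degree is $1+k-k=1$. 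Apply this with $c=J(z)$.

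Second, handle normalization. The expression just obtained is precisely the pushforward under $\mathcal{D}$ of the unnormalized density $p_b\prod_i p_i/p_b$. By change of variables, its integral over $z$ equals $\int p_b\prod_i p_i/p_b\,dy = Z$. Hence both sides have the same normalizing constant, and dividing by $Z$ gives
\begin{equation*}
\mathcal{C}[\mathcal{D}\#p_1,\dots,\mathcal{D}\#p_b]=\mathcal{D}\#\mathcal{C}[p_1,\dots,p_b].
\end{equation*}

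The main obstacle is the technical hygiene rather than the algebra. The ratios must be well-defined, which follows from the common-support assumption being preserved under the bijection $\mathcal{D}$. We also need $0<Z<\infty$ so that $\mathcal{C}$ is defined at all; this should be stated as a standing assumption, as in \cite{bradley2025mechanisms}. If the base measure is counting measure (discrete tokens), the argument is even simpler: $J\equiv1$, and the pushforward under a bijection is just relabeling, so the identity holds trivially. I would remark on this case separately.
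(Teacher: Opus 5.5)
Your argument is correct and follows the route the paper relies on. The paper does not prove this lemma itself; it imports it from \cite{bradley2025mechanisms}, and that proof is the same change-of-variables computation, in which the common Jacobian factor $J$ is pulled out of $q_b\prod_i q_i/q_b$ by 1-homogeneity. Your extra care is sound and left implicit in the paper: you check that the normalizing constant is preserved, you strengthen ``invertible $C^1$'' to a diffeomorphism (so $J>0$ and the pushforwards have densities), and you treat the discrete relabeling case separately.
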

Note that by a simple application of the chain rule, our composition (Equation \ref{eq:comp_op_auto}) simplifies to
\begin{equation*}
    \hat{p}(y_{1:T}|x) = p_b(y_{1:T}|x)\prod_{i=1}^k\frac{p_i(y_{1:T}|x)}{p_b(y_{1:T}|x)}.
\end{equation*}
Hence for the Proof of Theorem~\ref{thm:main_2}, we will speak in terms of distributions on the entire generation $y_{1:T}$, rather than distributions over the next token. We may simply use $p_i$ for $\mathcal{L}_{p_i}$ (or $p_i(y_{1:T}|x)$) for convenience.
\begin{proof}[Proof of Theorem~\ref{thm:main_2}]
Let
\begin{equation*}
q_i \coloneqq \mathcal{D}\#\mathcal{L}_{p_i}
\qquad\text{for } i=1,\dots,k,
\qquad\text{and}\qquad
q_b \coloneqq \mathcal{D}\#\mathcal{L}_{p_b}.
\end{equation*}
By assumption, the family $(q_1,\dots,q_k,q_b)$ satisfies Definition~\ref{def:fac_con}. Therefore, Theorem~\ref{thm:main} applied in the transformed feature space gives
\begin{equation*}
\mathcal{C}[q_1,\dots,q_k,q_b]
=
(q_b)_{M_b}\prod_{i=1}^k (q_i)_{M_i}.
\end{equation*}

By Lemma~\ref{lem:par_ind},
\begin{equation*}
\mathcal{C}[q_1,\dots,q_k,q_b]
=
\mathcal{C}[\mathcal{D}\#p_1,\dots,\mathcal{D}\#p_k,\mathcal{D}\#p_b]
=
\mathcal{D}\#\mathcal{C}[p_1,\dots,p_k,p_b].
\end{equation*}
Since $\hat p=\mathcal{C}[p_1,\dots,p_k,p_b]$, we obtain
\begin{equation*}
\mathcal{D}\#\mathcal{L}_{\hat p}
=
(\mathcal{D}\#\mathcal{L}_{p_b})_{M_b}
\prod_{i=1}^k (\mathcal{D}\#\mathcal{L}_{p_i})_{M_i}.
\end{equation*}
This is exactly the desired feature-space factorization.

Finally, since the transformed distributions satisfy the factorized conditionals property, the same argument as in Theorem~\ref{thm:main} shows that the projections $\Pi_i(y)=\mathcal{D}(y)_{M_i}$ recover the corresponding marginals. Hence $\hat p$ is a projective composition with respect to these projections.
\end{proof}

\section{Experiments}
    \label{app:expt}
    \subsection{Letter Replacement}

We begin with the toy example from Subsection~\ref{subsec:toy_ex}. We train a small decoder-only Transformer (GPT-2 style) from scratch using \texttt{GPT2LMHeadModel} \citep{gpt2, transformers_LIB} on a synthetic letter-replacement task. Each token represents a single character.

The vocabulary consists of the 26 uppercase English letters plus a space token. Each synthetic input has length 16. For each of the three tasks below, the submitted notebook constructs 2,000 training examples by sampling random strings and applying the task-specific deterministic transformation. The model uses \texttt{GPT2Config} with \texttt{n\_positions=32}, \texttt{n\_embd=64}, \texttt{n\_layer=2}, and \texttt{n\_head=2}. Models are trained for 5 epochs with AdamW, learning rate $3\cdot 10^{-4}$, and token-level cross-entropy loss. Decoding is greedy, taking the argmax token at each position.

We construct three models:
\begin{itemize}
    \item $p_b$: the base model, which implements the identity mapping (i.e., it returns the input string unchanged),
    \item $p_1$: a model that replaces every occurrence of \texttt{A} with \texttt{K},
    \item $p_2$: a model that replaces every occurrence of \texttt{M} with \texttt{B}.
\end{itemize}

We then compose these models using the rule in Equation~\ref{eq:comp_op_auto}. In the implementation, this corresponds to adding the log-probabilities of the two experts and subtracting the log-probabilities of the identity model. The resulting model $\hat{p}$ successfully combines both transformations: for any input string, it replaces every \texttt{A} with \texttt{K} and every \texttt{M} with \texttt{B}. In this toy setting, the method achieves a $100\%$ success rate over random unseen strings, confirming that the proposed composition operator captures the intended behavior.

\subsection{Experiments with LLMs}

We now provide further details of our experiments with LLMs in Section \ref{sec:expt}. We consider three Hugging Face checkpoints: \texttt{google/gemma-2-2b} for $p_{\text{base}}$, \texttt{MergeBench/gemma-2-2b\_math} for $p_{\text{math}}$, and \texttt{MergeBench/gemma-2-2b\_coding} for $p_{\text{coding}}$. The MergeBench checkpoints are domain-specialized variants derived from \texttt{Gemma2 2B}. The submitted supplementary notebook loads the models with \texttt{AutoModelForCausalLM} and \texttt{AutoTokenizer}; Hugging Face access is supplied through an external \texttt{HF\_TOKEN} environment variable, not hard-coded in the notebook.

We combine $p_{\text{base}}, p_{\text{math}},$ and $p_{\text{coding}}$ using our proposed \name operator (Equation~\ref{eq:comp_op_auto}) to produce a merged model $\hat{p}$. The implementation asserts that all tokenizers have the same vocabulary size and token-id mapping. During generation it stores separate key-value caches for the base, math, and coding models, computes
\begin{equation*}
\log \hat p_t = \log p_{\text{math},t}+\log p_{\text{coding},t}-\log p_{\text{base},t},
\end{equation*}
renormalizes with a softmax, and decodes greedily. We use \texttt{MAX\_NEW=512}, \texttt{SAMPLE\_MERGED=False}, \texttt{torch.manual\_seed(0)}, float16 on CUDA and float32 otherwise, and \texttt{device\_map="auto"} when CUDA is available. The base log-probabilities are clamped below at $-10^4$ for numerical stability.

\subsubsection{Evaluation Protocol}

\textbf{Mathematics.} Performance on mathematical reasoning is measured on two benchmarks: \texttt{GSM8K} and \texttt{MATH}. For \texttt{GSM8K}, the notebook calls \texttt{lm\_eval.simple\_evaluate} \citep{lm_eval_harness} with task \texttt{gsm8k}, \texttt{num\_fewshot=8}, and \texttt{batch\_size=1}; we report the flexible exact-match score. For the \texttt{MATH} benchmark, the notebook uses \texttt{DigitalLearningGmbH/MATH-lighteval}, configuration \texttt{all}, split \texttt{test}. It prepends a two-shot prompt and asks models to put the final answer in \texttt{\textbackslash boxed\{\}}. Correctness is determined by exact agreement of the extracted boxed answer after light normalization.

\textbf{Coding.} For code generation, we evaluate on \texttt{HumanEval+} and \texttt{MBPP+} using \texttt{evalplus} \citep{evalplus}. The notebook obtains problems via \texttt{get\_human\_eval\_plus} and \texttt{get\_mbpp\_plus}, writes generated completions to \texttt{samples\_humaneval.jsonl} and \texttt{samples\_mbpp.jsonl}, and then runs \texttt{evalplus.evaluate} once with dataset \texttt{humaneval} and once with dataset \texttt{mbpp}. Performance is measured using \texttt{pass@1}, which considers whether the first generated solution passes all tests. Generated programs are executed against the provided unit tests to determine correctness; for reproduction, code execution should be run in an isolated environment as recommended by \texttt{evalplus}.

\paragraph{Compute and supplementary material.}
Experiments were run on two NVIDIA H100 GPUs with 94 GB memory each. The supplementary material includes notebooks and README with the dependency installation command, model identifiers, evaluation commands, decoding settings, and the expected result table. Exact wall-clock times were not logged for every benchmark run. The code is available at \href{https://github.com/Aakash-verse/Compositional-Generalization-in-Autoregressive-Models-via-Logit-Composition}{GitHub repository}.

\subsubsection{Failure Mode with Additional Experts}

We also experimented with extending the composition operator to more than two specialized models by adding additional experts to the merge. In practice, performance dropped very sharply once more models were included. The merged model often became unstable and continued generating gibberish even after producing what looked like a complete code solution. A likely explanation is disagreement among the models about the end-of-sequence token and termination behavior. When the experts do not agree on when generation should stop, their combined log-probabilities can keep assigning nontrivial mass to continuation tokens, causing the merged model to over-generate instead of cleanly stopping at the end of the answer. This suggests that reliable multi-model composition may require stronger alignment of stopping criteria or an explicit mechanism for handling termination.

\section{Figures}
    \label{app:figs}
    \vspace{0.5cm}
\begin{figure}[h]
    \centering
    \input{Figures/trig_prob}
    \caption{A trigonometry problem requiring the use of basic algebraic manipulation and the identities \textcolor{red}{$\sin^2(x) + \cos^2(x) = 1$} and \textcolor{blue}{$\sec^2(x) - \tan^2(x) = 1$}.}
    \label{fig:trig}
\end{figure}
\vspace{0.5cm}
\begin{figure}[h]
    \centering
    \begin{tikzpicture}[scale=0.5, transform shape]

\tikzset{
    box/.style = {
        draw=darkgray,
        rounded corners,
        minimum width=2.5cm,
        minimum height=1.5cm,
        line width=1pt
    },
    redbox/.style = {
        box,
        fill=red!20,
        pattern=north east lines,
        pattern color=red!70!black
    },
    bluebox/.style = {
        box,
        fill=blue!20,
        pattern=dots,
        pattern color=blue!70!black
    },
    greenbox/.style = {
        box,
        fill=green!20,
        pattern=crosshatch,
        pattern color=green!70!black
    },
    legendbox/.style = {
        draw=darkgray,
        rounded corners,
        minimum width=0.55cm,
        minimum height=0.35cm,
        line width=1pt
    }
}

\node at (-2,0) {\Large $p_1$};
\node[redbox]  at (0,0) {};
\node[bluebox] at (3,0) {};
\node[bluebox] at (6,0) {};
\node[redbox]  at (9,0) {};
\node[redbox]  at (12,0) {};
\node[redbox]  at (15,0) {};
\node[redbox]  at (18,0) {};

\node at (-2,-2.5) {\Large $p_2$};
\node[redbox]   at (0,-2.5) {};
\node[redbox]   at (3,-2.5) {};
\node[redbox]   at (6,-2.5) {};
\node[redbox]   at (9,-2.5) {};
\node[greenbox] at (12,-2.5) {};
\node[greenbox] at (15,-2.5) {};
\node[redbox]   at (18,-2.5) {};

\node at (-2,-5) {\Large $\hat{p}$};
\node[redbox]    at (0,-5) {};
\node[bluebox]   at (3,-5) {};
\node[bluebox]   at (6,-5) {};
\node[redbox]    at (9,-5) {};
\node[greenbox]  at (12,-5) {};
\node[greenbox]  at (15,-5) {};
\node[redbox]    at (18,-5) {};

\begin{scope}[shift={(19.8,0.8)}]
    \node[
        draw=darkgray,
        rounded corners=0.5pt,
        fill=red!20,
        pattern=north east lines,
        pattern color=red!70!black,
        minimum width=0.35cm,
        minimum height=0.35cm,
        inner sep=0pt,
        line width=0.3pt
    ] at (0,0) {};
    \node[anchor=west, font=\scriptsize] at (0.3,0) {$p_b$};

    \node[
        draw=darkgray,
        rounded corners=0.5pt,
        fill=blue!20,
        pattern=dots,
        pattern color=blue!70!black,
        minimum width=0.35cm,
        minimum height=0.35cm,
        inner sep=0pt,
        line width=0.3pt
    ] at (0,-0.45) {};
    \node[anchor=west, font=\scriptsize] at (0.3,-0.45) {$p_1$};

    \node[
        draw=darkgray,
        rounded corners=0.5pt,
        fill=green!20,
        pattern=crosshatch,
        pattern color=green!70!black,
        minimum width=0.35cm,
        minimum height=0.35cm,
        inner sep=0pt,
        line width=0.3pt
    ] at (0,-0.9) {};
    \node[anchor=west, font=\scriptsize] at (0.3,-0.9) {$p_2$};
\end{scope}
\end{tikzpicture}
    \caption{Illustration of two specialized models $p_1$ and $p_2$ satisfying factorized conditionals (Definition~\ref{def:fac_con}) and their composition $\hat p$ under Equation~\ref{eq:comp_op_auto}. Each column represents a token coordinate in the generated sequence. Blue cells mark the active mask $M_1$ where $p_1$ differs from the background model $p_b$, green cells mark the active mask $M_2$ where $p_2$ differs from $p_b$, and red cells indicate coordinates on which the corresponding specialized model agrees with $p_b$. Because the active masks are disjoint, the likelihood-ratio composition cancels the inactive factors and yields a composed model that follows $p_1$ on $M_1$, follows $p_2$ on $M_2$, and follows $p_b$ elsewhere, giving the projective behavior stated in Theorem~\ref{thm:main}.}
    \label{fig:fac_con}
\end{figure}

\section{Acknowledgments}
    \label{app:ack}
    This research has been supported by the French government National Research Agency (ANR) through the UCA JEDI (ANR-15-IDEX-01), EUR DS4H (ANR-17-EURE-004), and the 3IA Côte d’Azur Investments in the Future project with the reference number ANR-23-IACL-0001. A huge thanks to Matteo Stromieri for helping us with the experiments. Experiments presented in this paper were carried out using the Grid'5000 testbed, supported by a scientific interest group hosted by Inria and including CNRS, RENATER and several Universities as well as other organizations (see https://www.grid5000.fr).
    


\end{document}